\newcommand*\Let[2]{\State #1 $\gets$ #2}
\newtheorem{theorem}{Theorem}
\newcommand{\set}[1]{\left\{ #1 \right\}}
\newcommand{\ecoS}{\textsc{Eco Search}\xspace}
\newcommand{\beeS}{\textsc{Bee Search}\xspace}
\newcommand{\heapS}{\textsc{Heap Search}\xspace}
\newcommand{\heap}{\textsc{Heap}}
\newcommand{\pop}{\textsc{pop}}
\newcommand{\peek}{\textsc{peek}}
\newcommand{\seen}{\textsc{Seen}}
\newcommand{\program}{\textsc{p}}
\newcommand{\cost}{\textsc{cost}}
\newcommand{\costsucc}{\textsc{cost-succ}}
\newcommand{\ComputeSuccessor}{\textsc{ComputeSuccessor}}
\newcommand{\minprogram}{\textsc{MinP}}
\newcommand{\mincost}{\textsc{MinCost}}
\newcommand{\myinsert}{\textsc{insert}}
\newcommand{\generated}{\textsc{Generated}}
\newcommand{\indextocost}{\textsc{Index2Cost}}
\newcommand{\queue}{\textsc{Queue}}
\newcommand{\crossbeam}{\textsc{Cross\-Beam}}
\newcommand{\euphony}{\textsc{Euphony}}
\newcommand{\deepcoder}{DeepCoder}
\newcommand{\bustle}{\textsc{Bustle}}
\newcommand{\tfcoder}{TF-Coder}
\title{\ecoS: A Constant-Delay Best-First Search Algorithm for Program Synthesis}
\author[1]{Th{\'e}o Matricon}
\author[1]{Nathana{\"e}l Fijalkow}
\author[2]{Guillaume Lagarde}
\affil[1]{CNRS, LaBRI and Universit{\'e} de Bordeaux, France}
\affil[2]{Universit{\'e} de Bordeaux, France}
\date{}
\begin{document}

\maketitle

\begin{abstract}
Many approaches to program synthesis perform a combinatorial search within a large space of programs to find one that satisfies a given specification. To tame the search space blowup, previous works introduced probabilistic and neural approaches to guide this combinatorial search by inducing heuristic cost functions. Best-first search algorithms ensure to search in the exact order induced by the cost function, significantly reducing the portion of the program space to be explored. We present a new best-first search algorithm called \ecoS, which is the first constant-delay algorithm for pre-generation cost function: the amount of compute required between outputting two programs is constant, and in particular does not increase over time. This key property yields important speedups: we observe that \ecoS outperforms its predecessors on two classic domains.
\end{abstract}

\section{Introduction}
Program synthesis is one of the oldest dream of Artificial Intelligence: it automates problem solving by generating a program meeting a given specification~\cite{Manna.Waldinger:1971,Gulwani.Polozov.ea:2017}.
A very classical scenario for user-based program synthesis, known as programming by example (PBE), uses input output examples as specification.
For PBE, combinatorial search for program synthesis has been an especially popular technique~\cite{Alur.Radhakrishna.ea:2017,Balog.Gaunt.ea:2017,Alur.Singh.ea:2018,Shi.Steinhardt.ea:2019,Barke.Peleg.ea:2020,Zohar.Wolf:2018,Ellis.Wong.ea:2021,Odena.Shi.ea:2021,Fijalkow.Lagarde.ea:2022,Shi.Bieber.ea:2022,Shi.Dai.ea:2022,Ameen.Lelis:2023}.

To scale combinatorial search for program synthesis, many approaches rely on defining a heuristic cost function assigning to every program a numerical value, such that the programs with least scores are the most likely to satisfy the specification.
For example, \deepcoder{}~\cite{Balog.Gaunt.ea:2017} and \tfcoder{}~\cite{Shi.Bieber.ea:2022} use neural models, while \bustle~\cite{Odena.Shi.ea:2021} leverages probabilistic methods for defining a heuristic cost function.
Very recently, LLMs have been used for guiding combinatorial search~\cite{Li.Parsert.ea:2024,Li.Ellis:2024}.

Best-first search algorithms explore the space in the exact order induced by the cost function: this significantly reduces the portion of the program space to be explored.
Since \euphony{}~\cite{Alur.Radhakrishna.ea:2017}'s use of $A^*$ algorithm, several best-first search algorithms have been constructed~\cite{Shi.Bieber.ea:2022,Ellis.Wong.ea:2021,Fijalkow.Lagarde.ea:2022,Ameen.Lelis:2023}.

The major issue of best-first search algorithms is that they \emph{slow down over time}. This is because in order to ensure optimality they need to consider a growing frontier of potentially next-to-be-generated programs in their data structures, which quickly become enormous.
The notion of \emph{delay} captures this behaviour: it quantifies the amount of compute required between outputting two programs. 
The first best-first search algorithm had linear delay~\cite{Alur.Radhakrishna.ea:2017}, and the state of the art algorithms achieve logarithmic delay~\cite{Fijalkow.Lagarde.ea:2022,Ameen.Lelis:2023}: the compute required between outputting the $t$\textsuperscript{th} and the $(t+1)$\textsuperscript{th} program is bounded by $O(\log(t))$.

The fundamental question explored in this paper is whether \textbf{there exist best-first search algorithms with constant delay}. We answer this question positively by constructing the first constant-delay best-first search algorithm called \ecoS for pre-generation cost function.
Importantly, \ecoS performs a \emph{bottom-up search}, which implies that it can take advantage of classical observational equivalence techniques. 
Technically, \ecoS relies on the ``cost tuple representation'' introduced in~\cite{Ameen.Lelis:2023}.
A key novelty of \ecoS is a new frugal expansion built on top of the one introduced in that paper, which ensures that it only considers programs when they need to be evaluated. Combined with novel data structures it enables \ecoS to achieve constant delay.

We demonstrate the effectiveness of \ecoS in two classic domains: 
the \deepcoder{}~\cite{Balog.Gaunt.ea:2017} domain of integer list manipulations and in the FlashFill~\cite{Gulwani:2011} domain of string manipulations.
In our experiments, \ecoS solves twice as many tasks in the same amount of time than previous methods.
To summarize, our contributions are the following:
\begin{itemize}
   \item We introduce \ecoS, a new best-first bottom-up search algorithm;
   \item Through a 
   theoretical analysis we show that \ecoS has constant delay; 
   \item Experimentally, we observe that \ecoS provides significant improvements over existing algorithms.
\end{itemize}

\section{Background}
\label{sec:background}
\subsection{Cost-guided combinatorial search}

We consider a set $P$ of elements, which for our applications in program synthesis is the class of all programs.
Given a specification $\varphi$ we write $p \models \varphi$ when the program $p$ satisfies the specification $\varphi$: we say that $p$ is a solution program.
Note that this definition is independent of the type of specification: a logical formula, a set of input output examples, or any other type of specifications discriminating between solutions and not solutions.

The goal of \textit{combinatorial search} for program synthesis is given a specification~$\varphi$ to find a solution program.
Sometimes it is useful to find more than one solution program, or even all of them; in this paper we focus on finding a single one, but the algorithms naturally extend to finding a finite number of solution programs.

In \textit{cost-guided combinatorial search}, we further assume a cost function $w : P \to \mathbb{R}_{> 0}$, mapping each program to a (positive) cost.
The cost function $w$ is used as a heuristic: the smaller the cost of a program, the more likely it is to be a solution program.
In this context, \textit{best-first search algorithms} enumerate programs by increasing costs.

\subsection{Domain-specific languages and context-free grammars}

Let us now be more specific about how programs are represented.
A domain-specific language (DSL) is a programming language designed to solve a specific set of tasks.
Classically, we represent DSLs using context-free grammars (CFGs), and more precisely deterministic tree grammars.
We let $\Sigma$ denote the set of primitive symbols, which include variables. Each symbol has a fixed arity (variables and constants have arity $0$).
The set of non-terminal symbols is $\Gamma$, and $S \in \Gamma$ is the initial non-terminal.
Derivation rules have the following syntax:
\[
X \rightarrow f(X_1,\dots,X_k), 
\]
where $f \in \Sigma$ has arity $k$ and $X,X_1,\dots,X_k \in \Gamma$.
The CFG is deterministic if given $X$ and $f$, there is a unique derivation rule from $X$ with $f$.
A grammar acts as a generator: it generates trees, which we call programs.

\begin{figure}
\begin{center}
\begin{tabular}{lllll}
$r_1$ : & \verb+str+ & $\rightarrow$ & ``Hello" & cost: $1.1$\\
$r_2$ : & \verb+str+ & $\rightarrow$ & ``World" & cost: $2.0$\\
$r_3$ : & \verb+str+ & $\rightarrow$ & \verb+cast+(\verb+int+) & cost: $4.4$ \\
$r_4$ : & \verb+str+ & $\rightarrow$ & \verb+concat+(\verb+str+, \verb+str+) & cost: $5.3$ \\
$r_5$ : & \verb+int+ & $\rightarrow$ & \verb+var+ & cost: $1.8$\\
$r_6$ : & \verb+int+ & $\rightarrow$ & \verb+1+ & cost: $3.3$\\
$r_7$ : & \verb+int+ & $\rightarrow$ & \verb+add+(\verb+int+, \verb+int+) & cost: $5.3$
\end{tabular}
\end{center}
\caption{A simple DSL.}
\label{fig:running_example_grammar}
\end{figure}

\vskip1em
To make things concrete, let us consider a small example.
Our DSL manipulates strings and integers, hence it uses two types: \verb+string+ and \verb+int+.
It has three primitives:

\begin{tabular}{l}
\verb+cast: int -> string+ \\
\verb+concat: string -> string -> string+ \\
\verb+add: int -> int -> int+
\end{tabular}

Let us add constants \verb+"Hello", "World": string+ and \verb+1: int+.
We also add a variable \verb+var: int+.
The class of programs of type \verb+int -> string+ is generated by the CFG given in Figure~\ref{fig:running_example_grammar}, which uses two non-terminals, \verb+string+ and \verb+int+, with the former being initial.
An example program generated by this grammar is
\verb+concat("Hello", cast(add(var,1)))+.
Using the natural semantics for \verb+concat+, \verb+cast+, and \verb+1+, this program concatenates \verb+"Hello"+ to the result of adding $1$ to the input variable and casting it as a string.

\subsection{Pre-generation cost functions}

In most cases cost functions are of a special nature: they are computed recursively alongside the grammar
and induced by defining $\cost(r)$ for each derivation rule $r$ (see Figure~\ref{fig:running_example_grammar} for an example).
Note that $\cost(r)$ can be any positive real number.
Consider a program $\program = f(\program_1,\dots,\program_k)$ generated by the derivation rule
$r: X \rightarrow f(X_1,\dots,X_k)$, meaning that $\program_i$ is generated by $X_i$, then 
\[
\cost(\program) = \cost(r) + \sum_{i = 1}^k \cost(\program_i).
\] 
What makes pre-generation cost functions special is that they do not depend on executions of the programs, in fact they do not even require holding the whole program in memory since they are naturally computed recursively.
Pre-generation cost functions is a common assumption~\cite{Balog.Gaunt.ea:2017, Ellis.Wong.ea:2021, Fijalkow.Lagarde.ea:2022}.

\section{\ecoS}
\label{sec:enumeration algorithms}
We present the four key ideas behind \ecoS: cost tuple representation, per non-terminal data structure, frugal expansion, and bucketing. The full description and pseudocode is given in the appendix in Section~\ref{sec:eco_search}, see also Section~\ref{sec:enumeration_algorithms} for a complete description and pseudocode with worked out examples of the two predecessors \heapS and \beeS.

To make our pseudocode as readable as possible we use the generator syntax of Python.
In particular, the \textbf{yield} statement is used to return an element (a program in our case) and continue the execution of the code.
The main function is called \textsc{Output}, its goal is to output one or more programs. 
It is informally decomposed into a \emph{generation} part, which is in charge of generating programs,
and an \emph{update} part, which updates the data structures.

\subsection{Cost tuple representation}
Let us take as starting point the \beeS~\cite{Ameen.Lelis:2023} algorithm, and its key idea: the \emph{cost tuple representation}.
\beeS algorithm maintains three objects:
\begin{itemize}
	\item $\generated$: stores the set of programs generated so far, organised by costs. 
	Concretely, it is a mapping from costs to sets of programs: $\generated[c]$ is the set of generated programs of cost $c$.
	\item $\indextocost$: a list of the costs of the generated programs.
	Let us write $\indextocost = [c_1,\dots,c_\ell]$, then $c_1 < \dots < c_\ell$ and $\generated[c_i]$ is defined.
	\item $\queue$: stores information about which programs to generate next. 
	Concretely, it is a priority queue of \textit{cost tuples} ordered by costs, that we define now.
\end{itemize}

Cost tuples are an efficient way of representing sets of programs.
A \textit{cost tuple} is a pair consisting of a derivation rule $r : X \rightarrow f(X_1,\dots,X_k)$ and a tuple $n = (n_1,\dots,n_k) \in \mathbb{N}^k$.
For derivation rules $r : X \rightarrow a$, cost tuples are of the form $(r, \emptyset)$.
A cost tuple represents a set of programs: 
$(r, n)$ represents all programs generated by the rule $r$ where the $i$\textsuperscript{th} argument is any program in $\generated[\indextocost[n_i]]$.
The cost of a cost tuple $t = (r, n)$ is defined as 
\[
\cost(t) = \cost(r) + \sum_{i = 1}^k \indextocost[n_i].
\]

A single call to \textsc{output} generates \textbf{all} programs represented by the cost tuple $t$ found by popping $\queue$. 
Let us consider the case of a cost tuple $t = (r : X \rightarrow f(X_1,\dots, X_k), n : (n_1,\dots,n_k))$, the pseudocode addressing this case is given in Algorithm~\ref{algo:bee_search_extract}.
The idea is to fetch all programs $\program_i$ in $\generated[\indextocost[n_i]]$ and to form the programs $f(\program_1,\dots,\program_k)$. The issue is here that not all such programs are derived from the grammar: we additionally need to check whether each $\program_i$ was generated from $X_i$ so we can apply the rule $r : X \rightarrow f(X_1,\dots, X_k)$. This means that many programs are discarded at this step, and it may even happen that no program is generated by a call to \textsc{output}.

\begin{algorithm}
\small
  \caption{The generation part of \beeS\label{algo:bee_search_extract}}
  \begin{algorithmic}[1]
	\Function{output()}{}:
		\State (skip part of the code)
		\State $t = (r : X \rightarrow f(X_1,\dots, X_k), n : (n_1,\dots,n_k))$
		\For{$\program_1,\dots,\program_k$ in $\bigotimes_{i = 1}^k \generated[\indextocost[n_i]]$}
			\If{for all $i \in \set{1,\dots,k}$, $\program_i$ is generated by $X_i$}
				\Let{$\program$}{$f(\program_1,\dots,\program_k)$}
				\State add $\program$ to $\generated[c]$
				\State \textbf{yield} $\program$
			\EndIf
		\EndFor
	\EndFunction
  \end{algorithmic}
\end{algorithm}

Taking a step back, the issue is that $\generated$ contains all generated programs, losing track of which non-terminal were used to generate them.

\subsection{Per non-terminal data structure}
Enters the \heapS algorithm, which introduces the second key idea: \emph{per non-terminal data structure}.
Simply put, instead of a general data structure, \heapS maintains independent objects for each non-terminal.
Let us apply this philosophy and define the data structures for \ecoS. 
Our algorithm maintains three objects for each non-terminal~$X$:
\begin{itemize}
	\item $\generated_X$: stores the set of programs generated from $X$ so far, organised by costs. 
	Concretely, it is a mapping from costs to sets of programs: $\generated_X[c]$ is the set of generated programs of cost $c$.
	\item $\indextocost_X$: a list of the costs of the generated programs from $X$.
	Let us write $\indextocost_X = [c_1,\dots,c_\ell]$, then $c_1 < \dots < c_\ell$ and $\generated_X[c_i]$ is defined.
	\item $\queue_X$: stores information about which programs to generate next. 
	Concretely, it is a priority queue of \textit{cost tuples} ordered by costs, that we define now.
\end{itemize}

We naturally adapt the definition as follows. 
A cost tuple represents a set of programs: $(r, n)$ represents all programs generated by the rule $r$ where the $i$\textsuperscript{th} argument is any program in $\generated_X[\indextocost_{X_i}[n_i]]$.

This makes the generation part in \ecoS very efficient, solving the limitation discussed above in \beeS. 
In Algorithm~\ref{algo:eco_search_extract} we spell out part of the function $\textsc{output}$, which takes as input a non-terminal $X$ and a natural number $\ell$ (and becomes recursive).
To formulate its specification let us write for a non-terminal $X$ the set of costs $c_1 < c_2 < c_3 < \dots$ of all programs generated from $X$, we say that $c_\ell$ is the $\ell$-smallest cost for $X$.
The output of $\textsc{output}(X, \ell)$ is the set of all programs generated from $X$ with $\ell$-smallest cost.

\begin{algorithm}
\small
  \caption{The generation part in \ecoS\label{algo:eco_search_extract}}
  \begin{algorithmic}[1]
	\Function{output}{$X, \ell$}:
		\State (skip part of the code)
		\State $t = (r : X \rightarrow f(X_1,\dots, X_k), n : (n_1,\dots,n_k))$
		\For{$\program_1,\dots,\program_k$ in $\bigotimes_{i = 1}^k \textsc{output}(X_i, n_i)$}
			\Let{$\program$}{$f(\program_1,\dots,\program_k)$}
			\State add $\program$ to $\generated_X[c]$
			\State \textbf{yield} $\program$
		\EndFor
	\EndFunction
  \end{algorithmic}
\end{algorithm}

\subsection{Frugal expansion}
We have presented the data structures of \ecoS, the way it generates programs, and the specification of its main function $\textsc{output}$.
We now focus on the update part of $\textsc{output}$. 
The third key idea is frugal expansion, which addresses the main issue with \heapS: the number of recursive calls to \textsc{output}.
Indeed, to maintain the invariants on the data structures, we need to add cost tuples to the queue.
As fleshed out in Algorithm~\ref{algo:eco_search_update}, for a tuple $n : (n_1,\dots,n_k)$ we consider the $k$ tuples obtained by adding $1$ to each index $i \in [1,k]$: $m^{i} : (n_1,\dots,n_i + 1,\dots,n_k)$.

The issue is that this happens recursively as written in Algorithm~\ref{algo:eco_search_extract}, leading to many recursive calls. 
Two things can happen for a call to $\textsc{output}(X,\ell)$:
\begin{itemize}
	\item Either the result was already computed (if $\indextocost_X[\ell]$ is defined) and its answer is read off the data structure;
	\item Or it was not, and we perform some recursive calls as described in Algorithm~\ref{algo:eco_search_update}.
\end{itemize}
The key property of frugal expansion is that when calling $\textsc{output}(X,\ell)$, for each non-terminal $Y$, at most one recursive call $\textsc{output}(Y,\_)$ falls in the second case. This analysis was already done in details in previous work in the arxiv version Section C.2 Lemma 2 of \cite{Fijalkow.Lagarde.ea:2022}, therefore we only give an overview.

\begin{algorithm}
\small
\caption{Update part in \ecoS\label{algo:eco_search_update}}
\begin{algorithmic}[1]
\Function{output}{$X, \ell$}:
	\State (skip part of the code)
	\State $t = (r : X \rightarrow f(X_1,\dots, X_k), n : (n_1,\dots,n_k))$
	\State (skip generation part of the code)
	\For{$i$ from $1$ to $k$}
		\Let{$n'$}{$n$}
		\Let{$n'_i$}{$n_i + 1$}
		\If{$t' = (r, n')$ not in $\queue_X$} 
	   		\If{$n'_i$ not in $\indextocost_{X_i}$}
	   			\Let{$c'$}{$\cost(\peek(\queue_{X_i}))$}
				\Let{$\indextocost_{X_i}[n'_i]$}{$c'$}
			\EndIf
			\State add $t'$ to $\queue_X$ with value $\cost(t')$
		\EndIf
  	\EndFor
\EndFunction
\end{algorithmic}
\end{algorithm}

At this point we have a simplified version of \ecoS: as we will see in the experiments, it already outperforms \heapS and \beeS, but it does not yet have constant delay. We will later refer to this algorithm as ``\ecoS without bucketing''.

\subsection{Bucketing}
To introduce our main innovation, we need to state and prove some theoretical properties on the costs of programs induced by pre-generation cost functions.
First some terminology: let us fix a non-terminal $X$, and $\program, \program'$ two programs generated by $X$.  We say that $\program'$ is a successor of $\program$ if $\cost(\program) < \cost(\program')$
and there does not exist $\program''$ generated by $X$ such that $\cost(\program) < \cost(\program'') < \cost(\program')$. 
In other words, $\program'$ has minimal cost among programs of higher cost than $\program$ generated by $X$.
Note that a program may have many successors, but they all have the same costs. 
We write $\costsucc(\program)$ for the cost of any successor of $\program$.

We first prove that successors in the cost tuple spaces are close in the cost space.
Proofs of both lemmas below can be found in Section~\ref{sec:proofs}.

\begin{restatable}{lemma}{costdifference}\label{lemma:cost_difference}
There exists a constant $M \geq 0$ such that for any program $\program$ we have 
$\costsucc(\program) - \cost(\program) \leq M$.
\end{restatable}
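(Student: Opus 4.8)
The plan is to reduce the statement to a per-non-terminal claim and then bound the gaps between consecutive costs by exploiting the recursive definition of $\cost$. Fix a non-terminal $X$ and let $\mathcal{C}_X \subseteq \mathbb{R}_{>0}$ be the set of costs of programs generated from $X$; it suffices to exhibit a constant $M_X$ bounding consecutive gaps in $\mathcal{C}_X$ and then take $M = \max_X M_X$ over the finitely many non-terminals. A preliminary observation makes all successors well defined: since every rule has cost at least $c_{\min} \MathDef \min_r \cost(r) > 0$, any program of cost at most $B$ has at most $B / c_{\min}$ nodes, so there are only finitely many programs of cost at most $B$. Hence each $\mathcal{C}_X$ is discrete and well ordered, the notion of consecutive costs (and thus $\costsucc$) is well defined, and speaking of bounded gaps is meaningful. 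We also assume, as we may, that every non-terminal is productive, since non-productive ones can be removed without changing the generated programs.

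First I would treat non-terminals that lie on a cycle of the dependency graph (with an edge $X \to Y$ whenever some rule from $X$ mentions $Y$). If $X$ sits in a non-trivial strongly connected component, there is a one-hole context $C$ from $X$ back to $X$, obtained by going once around a cycle and filling all side arguments with fixed minimal-cost programs; its cost $\delta_X > 0$ is a fixed constant. Then for every program $\program$ generated from $X$, the program $C[\program]$ is again generated from $X$ and has cost $\cost(\program) + \delta_X$, which immediately gives $\costsucc(\program) \leq \cost(\program) + \delta_X$. So $M_X = \delta_X$ works for every cyclic non-terminal, and moreover each such $\mathcal{C}_X$ is unbounded above.

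For the remaining non-terminals, which form trivial components, I would induct in reverse topological order on the components, using the recursive identity $\mathcal{C}_X = \bigcup_{r : X \to f(X_1,\dots,X_k)} \bigl(\cost(r) + \mathcal{C}_{X_1} + \dots + \mathcal{C}_{X_k}\bigr)$, where each $X_i$ lies strictly lower and is therefore already known to be either finite or unbounded above with bounded gaps. The two routine facts needed are: (i) for a Minkowski sum, if one summand is unbounded above with $M'$-bounded gaps and the others are nonempty, then the sum is unbounded above with $M'$-bounded gaps (replace an element of the good summand by its successor); and (ii) a finite union of sets that are each finite or unbounded-above-with-bounded-gaps is again of that form. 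In program terms the step is transparent: writing $\program = f(\program_1,\dots,\program_k)$, one replaces the argument $\program_j$ belonging to an infinite child by the next-costlier program from $X_j$, producing a program from $X$ of cost in $(\cost(\program), \cost(\program) + M_{X_j}]$.

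Finally I would set $M = \max_X M_X$, a maximum over finitely many non-terminals, which then bounds $\costsucc(\program) - \cost(\program)$ for every program, regardless of which non-terminal it is viewed through. I expect the main obstacle to be the cyclic dependency among non-terminals: a non-terminal can generate infinitely many programs without admitting any self-context (for instance when the recursion happens strictly below it), so one cannot simply pump in place everywhere. Separating the cyclic components, handled by the self-context pumping, from the acyclic ones, handled by the sumset/union induction, is precisely what makes the argument go through; the remaining sumset and union gap computations are elementary once discreteness is established.
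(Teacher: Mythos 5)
Your proof is correct, but it follows a genuinely different route from the paper's. The paper defines the finite family $\mathcal{F}$ of programs in which no non-terminal repeats along any root-to-leaf path (and which admit a successor), takes $M$ to be the largest gap $\costsucc(\program) - \cost(\program)$ over that finite family, and reduces the general case to it by a local surgery: inside an arbitrary program it locates a subprogram belonging to $\mathcal{F}$, replaces it by one of its successors, and notes that the resulting program witnesses $\costsucc(\program) - \cost(\program) \leq M$. You instead decompose the grammar into strongly connected components: cyclic non-terminals are handled by pumping a one-hole self-context of fixed cost $\delta_X$, and acyclic ones by induction along the component DAG via the Minkowski-sum/union structure of the cost sets. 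The core move is the same in both proofs (replace one subprogram by its successor and leave the rest untouched, so the cost difference is inherited), but the packaging differs. Your version carries more machinery — productivity normalization, the discreteness argument, the two sumset/union facts, and the union fact does need the small extra observation that only finitely many costs of the union lie below the minimum of an infinite piece, so that those initial gaps can be absorbed into the constant — but in exchange it produces a more explicit, essentially computable bound (roughly the cheapest cost of going once around a cycle, corrected by finitely many boundary gaps), and it avoids the somewhat delicate descent the paper performs to find a node whose subprogram is in $\mathcal{F}$ and still has a successor. The paper's version is shorter but defines $M$ as a maximum over a finite yet potentially enormous set of bounded-depth programs.
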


A consequence of Lemma~\ref{lemma:cost_difference} is a similar bound, this time applying to the queue in $\ecoS$.

\begin{restatable}{lemma}{boundcost}\label{lemma:bound_cost}
There exists a constant $M' \geq 0$ such that in \ecoS at a any given time, for any non-terminal $X$, all programs $\program$ in the queue $\queue_X$ satisfy:
\begin{equation*}
	\cost(\program) - \min_{\program' \in \queue_X} \cost(\program') \leq M'.
\end{equation*}
\end{restatable}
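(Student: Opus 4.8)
The plan is to bound the costs of the cost tuples held in $\queue_X$: since every program represented by a tuple $t \in \queue_X$ has cost exactly $\cost(t)$ (a program $f(\program_1,\dots,\program_k)$ built from $t = (r,n)$ has cost $\cost(r) + \sum_i \indextocost_{X_i}[n_i] = \cost(t)$), the quantity $\min_{\program \in \queue_X}\cost(\program)$ is the minimal tuple cost, and it suffices to bound the spread of tuple costs in $\queue_X$. Fix a non-terminal $X$ and a moment in the execution, and write $c_{\min} = \min_{t \in \queue_X}\cost(t)$.

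First I would record two facts. (A) \ecoS pops tuples from $\queue_X$ in non-decreasing cost order, and every newly inserted tuple has cost strictly above the cost of the tuple whose popping triggered its insertion, because the successor gaps $\indextocost_{X_i}[n_i+1]-\indextocost_{X_i}[n_i]$ are positive (distinct programs from $X_i$ have distinct consecutive costs). Consequently $c_{\min}$ is non-decreasing over time, and at the fixed moment every tuple already popped from $\queue_X$ has cost at most $c_{\min}$. (B) By the invariants maintained on $\indextocost_{X_i}$—the list stores the true sorted costs of programs generated from $X_i$, the value written via $\cost(\peek(\queue_{X_i}))$ being the genuine next smallest cost—two consecutive entries $\indextocost_{X_i}[m]$ and $\indextocost_{X_i}[m+1]$ are the cost of some program $\program$ generated by $X_i$ and the cost of a successor of $\program$; hence by Lemma~\ref{lemma:cost_difference} they differ by at most $M$.

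Next I would split the tuples of $\queue_X$ into two cases. Consider a non-initial tuple $t = (r, n)$ with $r : X \rightarrow f(X_1,\dots,X_k)$, that is, one obtained by incrementing a coordinate of a predecessor. By the update rule (Algorithm~\ref{algo:eco_search_update}), $t$ was inserted into $\queue_X$ exactly when some predecessor $t^- = (r, n^-)$, with $n^-$ equal to $n$ with its $j$-th coordinate decremented, was popped. Their costs differ by a single consecutive gap of $\indextocost_{X_j}$,
\[
\cost(t) - \cost(t^-) = \indextocost_{X_j}[n_j] - \indextocost_{X_j}[n_j - 1] \leq M
\]
by fact (B). Since $t^-$ has been popped, fact (A) gives $\cost(t^-) \leq c_{\min}$, whence $\cost(t) \leq c_{\min} + M$. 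For an initial tuple (one per rule with head $X$, present in $\queue_X$ from the start), its cost lies within a grammar-dependent constant $D$ of the smallest initial tuple cost for $X$, where $D$ is the maximum over non-terminals of the spread of initial tuple costs; since $c_{\min}$ is at least the smallest initial tuple cost for $X$ (fact (A)), such a tuple satisfies $\cost(t) \leq c_{\min} + D$. Taking $M' = \max(M, D)$ yields the claimed bound for every tuple, and hence for every program, in $\queue_X$.

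The main obstacle I anticipate is fact (B): justifying that the $\indextocost_{X_i}$ gaps are genuine successor gaps to which Lemma~\ref{lemma:cost_difference} applies. This requires the algorithm's invariant that $\indextocost_{X_i}$ is filled consecutively and that each entry—whether computed directly or via $\cost(\peek(\queue_{X_i}))$—equals the true next smallest cost for $X_i$; once this is established, the successor-gap bound is immediate. The remaining steps, namely the monotonicity of $c_{\min}$ and the case split, are routine given facts (A) and (B).
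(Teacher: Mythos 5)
Your proof is correct and rests on the same two ingredients as the paper's: the successor-gap bound of Lemma~\ref{lemma:cost_difference} and the fact that insertions are triggered only by popping a minimal-cost element, so the queue minimum is non-decreasing. The paper phrases this as an induction maintaining the invariant over time while you classify the queue's contents directly at a fixed instant (and are more explicit about the cost-tuple layer and the initial tuples' spread $D$), but these are presentational differences of the same argument.
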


Let us make a simplifying assumption: the cost function takes integer values, meaning $w : P \to \mathbb{N}_{> 0}$.
Let us analyse the time complexity of $\textsc{output}(X, \_)$.
As discussed above frugal expansion implies that for each non-terminal $Y$, at most one call to $\textsc{output}(Y, \_)$ yields to recursive calls.
Hence the total number of recursive calls is bounded by the number of non-terminals, and we are left with analysing the time complexity of a single call.
It is bounded by the time needed to pop and push a constant number (bounded by the maximum arity in the CFG) of cost tuples from a queue.
If the queues are implemented as priority queues, the time complexity of these operations is $O(\log N)$, where $N$ is the number of elements in the queue. 

However, thanks to Lemma~\ref{lemma:bound_cost}, there are at most $M$ possible costs in the queue at any given time. Therefore, we can implement the queues as ``bucket queues'' (a classical data structure, see for instance~\cite{Thorup:2000}).
Concretely, a bucket queue is an array of $M$ lists, each containing cost tuples with the same cost. 
We keep track of the index $j$ of the list that contains programs of minimal cost. 
To pop a cost tuple, we iterate over the $j$\textsuperscript{th} list.
If the list at index $j$ is empty, we increment $j \mod M$ until we find a non-empty list. 
To push an element that has a cost $k$ plus from the current minimal cost, we simply add it to the list at index $(j+k) \mod M$. 
The time complexity of popping and pushing an element in this implementation is constant with lists implemented as single linked lists for example.

\begin{theorem} 
Assuming integer costs, \ecoS has constant delay: the amount of compute between generating two programs is constant over time.
\end{theorem}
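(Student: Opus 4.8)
The plan is to assemble the time-complexity pieces that the excerpt has already laid out and show that each ingredient is $O(1)$, independent of the index $\ell$ and of how many programs have been output so far. The key quantity is the work performed by a single top-level call to $\textsc{output}(S, \ell)$, where $S$ is the initial non-terminal; by the specification this call outputs all programs of the $\ell$-smallest cost for $S$, and a full enumeration is obtained by iterating $\ell = 1, 2, 3, \dots$. I would define the delay between the $t$\textsuperscript{th} and $(t+1)$\textsuperscript{th} output program as the compute spent in the portion of $\textsc{output}$ responsible for those outputs, and argue this is bounded by a constant not depending on $t$.

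First I would bound the number of recursive calls triggered by one top-level invocation. By the frugal expansion property stated just before Lemma~\ref{lemma:cost_difference}, for each non-terminal $Y$ at most one recursive call $\textsc{output}(Y, \_)$ falls into the ``not yet computed'' case that performs queue operations; every other call is resolved by an $O(1)$ read from the data structure. Hence the number of queue-manipulating recursive calls is at most $|\Gamma|$, the number of non-terminals, which is a fixed property of the DSL and therefore a constant. Next I would analyse the cost of a single such recursive call: it pops one cost tuple from $\queue_X$ and, as written in Algorithm~\ref{algo:eco_search_update}, pushes at most $k$ new cost tuples, where $k$ is bounded by the maximum arity in the CFG — again a constant of the grammar. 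So the total work per top-level call reduces to performing $O(1)$ pops and pushes on the per-non-terminal queues.

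The crux is to make each pop and push $O(1)$ rather than the naive $O(\log N)$ of a comparison-based priority queue, and this is exactly where Lemma~\ref{lemma:bound_cost} and the integer-cost assumption do the work. By Lemma~\ref{lemma:bound_cost}, at any time every cost tuple in $\queue_X$ lies within an additive window of width $M'$ above the current minimum cost; since costs are integers, there are at most $M' + 1$ distinct cost values present simultaneously. This bounded spread is precisely the precondition for a bucket queue of $M'+1$ buckets, indexed cyclically modulo the window size. I would then verify that the bucket-queue operations described in the excerpt are genuinely constant time: a push places a tuple into bucket $(j + k) \bmod M'$ in $O(1)$ using singly linked lists, and a pop reads from the current minimum bucket $j$ in $O(1)$. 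The only subtle point is advancing $j$ when a bucket empties, but because the total spread is bounded by the constant $M'$, the pointer $j$ can advance by at most $M'$ positions before finding the next nonempty bucket, so this search is also $O(1)$ amortized — and one can even make it worst-case constant by maintaining the bucket structure carefully.

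Combining these, a single top-level call to $\textsc{output}$ performs at most $|\Gamma|$ recursive calls, each doing $O(\text{arity}) = O(1)$ bucket-queue operations of $O(1)$ cost each, for a total of $O(|\Gamma| \cdot \text{arity}) = O(1)$ work, all bounds depending only on the fixed grammar and the constant $M'$ and never on $\ell$ or $t$. I expect the main obstacle to be the bucket-queue pop when a bucket empties and $j$ must advance: one must argue carefully that the modular increment loop runs for at most $M'$ iterations and, if worst-case rather than amortized constant delay is desired, explain how to maintain a pointer to the next nonempty bucket so that the advance is genuinely $O(1)$ each step. A secondary point worth stating explicitly is that the per-program overhead of writing into $\generated_X$ and yielding is itself $O(1)$, so that amortizing over the possibly several programs sharing the same cost does not hide any growing term.
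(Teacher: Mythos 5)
Your proposal is correct and follows essentially the same route as the paper: frugal expansion bounds the number of queue-manipulating recursive calls by the number of non-terminals, each call performs at most max-arity pushes and one pop, and Lemma~\ref{lemma:bound_cost} together with integer costs justifies replacing the priority queues by bucket queues with $O(1)$ operations. Your extra care about advancing the minimum-bucket pointer $j$ is a welcome refinement of a point the paper glosses over, but since the window width is the constant $M'$ it does not change the argument.
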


\section{Experiments}
\label{sec:experiments}
\begin{figure*}[ht]
    \includegraphics[width=0.9\columnwidth]{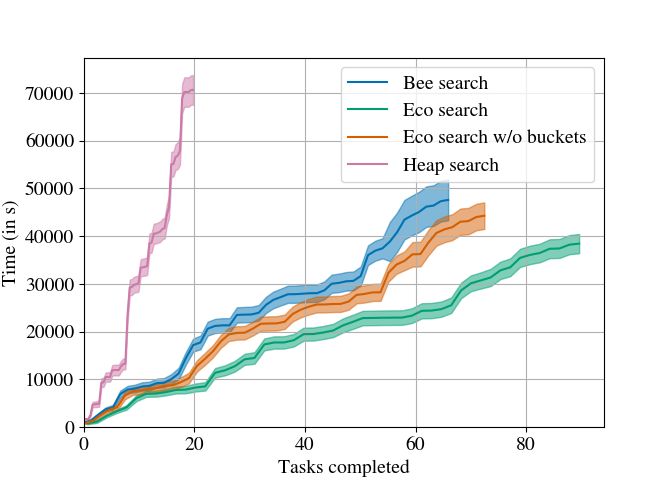} 
    \caption{String manipulations from SyGuS using FlashFill's DSL}
    \label{fig:string_results}
\end{figure*}

\begin{figure*}[ht]
    \includegraphics[width=0.9\columnwidth]{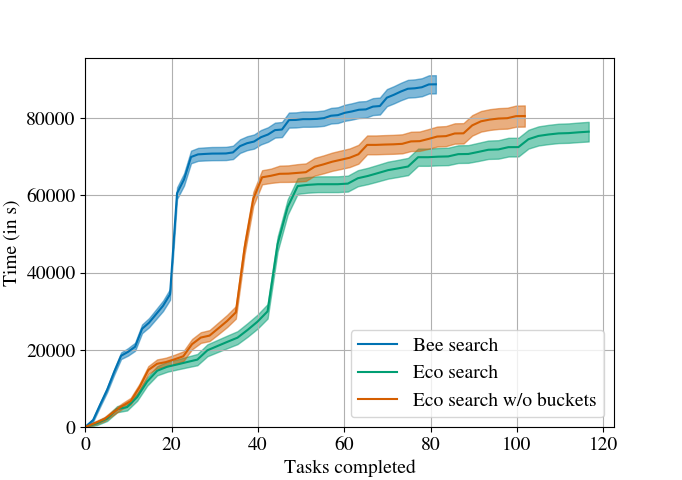}
    \caption{Integer List Manipulation using DeepCoder's DSL}
    \label{fig:deepcoder_results}
\end{figure*}

\begin{figure*}[ht!]
    \centering
    \begin{subfigure}{0.4\textwidth}
        \includegraphics[width=\columnwidth]{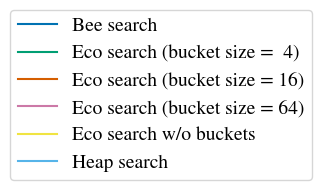} 
    \end{subfigure}

    \begin{subfigure}{0.5\textwidth}
        \includegraphics[width=\columnwidth]{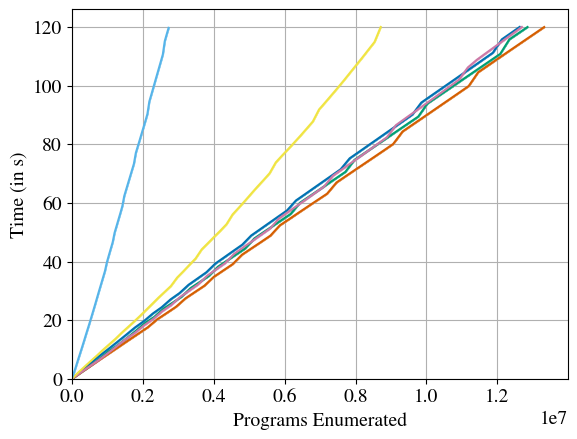} 
        \caption{Throughput for $D_4$ ($12$ derivation rules)}
    \end{subfigure}%
    \begin{subfigure}{0.5\textwidth}
        \includegraphics[width=\columnwidth]{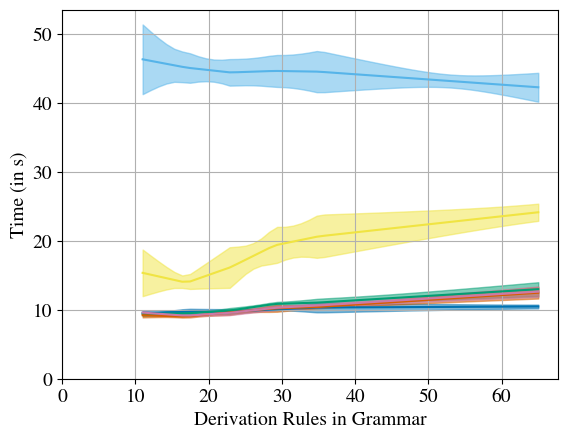}
        \caption{Scaling law for $D_k$ (number of derivation rules)}
    \end{subfigure}

    \begin{subfigure}{0.5\textwidth}
        \includegraphics[width=\columnwidth]{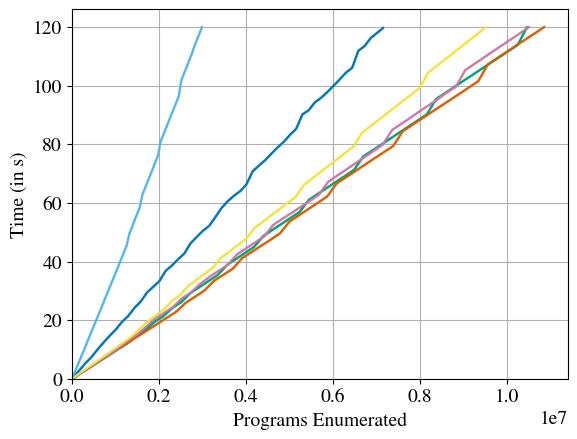} 
        \caption{Throughput for $N_4$ ($4$ non-terminals)}
    \end{subfigure}%
    \begin{subfigure}{0.5\textwidth}
        \includegraphics[width=\columnwidth]{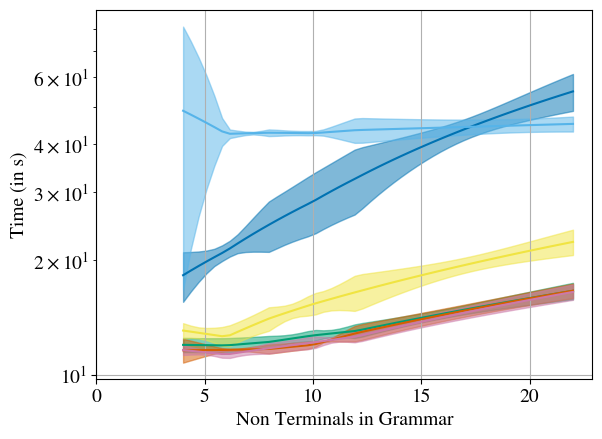}
        \caption{Scaling law for $N_k$ (number of non-terminals)}
    \end{subfigure}

    \begin{subfigure}{0.5\textwidth}
        \includegraphics[width=\columnwidth]{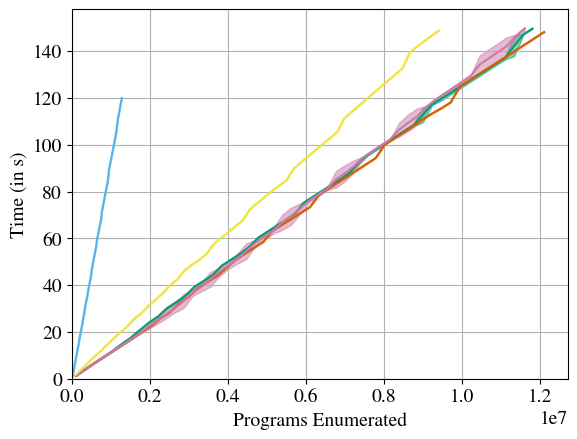} 
        \caption{Throughput for $R_4$}
        \label{fig:scaling_dist}
    \end{subfigure}%
    \begin{subfigure}{0.5\textwidth}
        \includegraphics[width=\columnwidth]{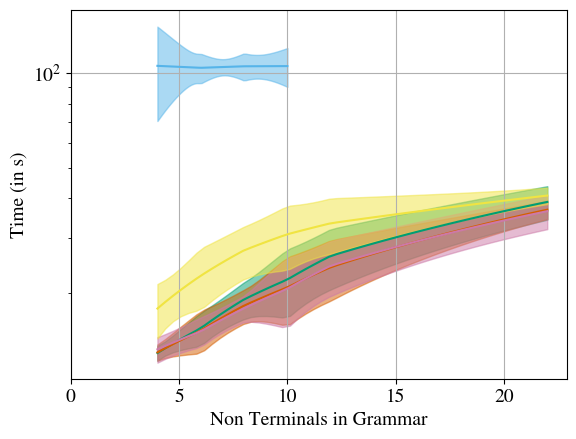}
        \caption{Scaling law for $R_k$ (distance to initial non-terminal). Time is in log scale}
    \end{subfigure}

    \caption{Scaling against the three parameters: throughput and scaling laws.}
    \label{fig:scaling}
\end{figure*}

To investigate whether the theoretical properties of \ecoS bear fruits we ask the following questions: 
\begin{itemize}
    \item[\textbf{Q1:}] Does \ecoS improve the performance of enumerative approaches on program synthesis tasks?
    \item[\textbf{Q2:}] How does the performance of these algorithms scale with the complexity of the grammar?
\end{itemize}

\paragraph*{Datasets.}
We consider two classic domains: string manipulations and integer list manipulations.
For string manipulations we use the same setting as in \beeS~\cite{Ameen.Lelis:2023}: FlashFill's 205 tasks from SyGuS. The DSL has $3$ non-terminals, one per type.
For integer list manipulation we use the DeepCoder~\cite{Balog.Gaunt.ea:2017} dataset comprised of 366 tasks. The DSL has $2$ non-terminals, again one per type.
We set a timeout of one hour or more.

The cost functions used are the same for all algorithms, following~\cite{Fijalkow.Lagarde.ea:2022}.
Predictions are obtained with the help of a neural network outputting probability for each derivation rule.
The neural networks are trained on the same synthetic dataset (one for each domain).

\paragraph*{Implementation.}
All algorithms are re-implemented in Python. The implementation is available at \url{https://github.com/SynthesisLab/DeepSynth2/tree/eco_search_aaai}.
The code is made available as supplementary material, it contains the seeds used, the cost functions and all other additional minor experimental details.
All experiments were run on a 16 GB RAM machine with an Intel Xeon(R) W-1270 CPU running at up to 3.40GHz, running Ubuntu Jellyfish (no GPUs were used).
They were run on at least five different seeds and we report the mean performance along with the 95\% confidence interval.

\paragraph*{Algorithms.}
We compare \ecoS against the two state of the art best-first search algorithms: \heapS and \beeS.
Since they are all bottom-up algorithms they all use observational equivalence (pruning programs with same outputs on all input examples).
None of them have hyperparameters except for the rounding off procedure for costs.
For \beeS we follow the original implementation and round off cost values to $10^{-2}$ in log space (since our cost function are probabilities).
For \ecoS we need to discretize costs, as follows. We discretize probabilities in log space up to $10^{-5}$, meaning that in these experiments two probabilities whose ratio is larger than $1$ but less than $e^{10^{-5}}$ are the same and cannot be distinguished.
By default we use a bucket size of $20$.
We also experiment with other values, and for comparison, we also consider \ecoS without bucketing.
When the constant $M$ is less than 1000, we use $M$ instead of the given bucket size.
Those parameters were not tuned, we chose these constant as a naive trade-off.

\subsection*{Does \ecoS improve the performance of enumerative approaches on program synthesis tasks?}

We run all best-first search algorithms on our benchmarks.
The timeout per task is five minutes ($300 s$).
We plot the mean cumulative time used and the 95\% confidence interval with respect to the number of tasks completed successfully on Figure~\ref{fig:string_results} for string manipulations and Figure~\ref{fig:deepcoder_results} for integer list manipulations.

First, for string manipulation, we observe that \heapS is far outperformed by other algorithms with \ecoS achieving the same score in $14\%$ of the time it took \heapS. This is why we did not include \heapS in integer list manipulation because it times out on most tasks.

Second, \ecoS without buckets outperfoms \beeS.
The increase in performance is small on string manipulation with a bit less than $10$ more tasks solved but larger on integer list manipulation as it solves more than $20$ more tasks compared to \beeS.
To explain why the gap in performance is different in the two domains, we will see in the next experiment that \beeS scales poorly with the number of non-terminals in the DSL, which is larger for string manipulation.

Finally, \ecoS outperforms all other algorithms by a large margin, solving $13$ more tasks on integer list manipulations and $20$ more tasks than its variant without bucketing.
Comparing to \beeS, it reaches the same number of tasks solved in slightly more than half the time for string manipulation and $78\%$ of the time for integer list manipulation, while solving at least $20$ new tasks compared to \beeS on both datasets.

\paragraph*{Summary} 
\begin{quote}
    \ecoS outperfoms all other algorithms including its variant without bucketing, reaching the same number of tasks solved in $66\%$ of the time and solving $30\%$ more tasks in total. 
\end{quote}

\subsection*{How does the performances of these algorithms scale with the complexity of the grammar?}
The goal of these experiments is to understand how well our algorithms perform on more complicated grammars.
However there is no agreed upon definition of ``grammar complexity'' as different measures of complexity can be used. 
A bad proxy for grammar complexity is the number of programs it generates: it is in most cases infinite, and as a function of depth it grows extremely fast hence cannot be accurately compared. 
We identify three parameters:
\begin{itemize}
    \item The number of derivation rules;
    \item The number of non-terminals;
    \item The maximal distance from a non-terminal to the start non-terminal, meaning the number of derivation rules required to reach the non-terminal.
\end{itemize}
In our experiments, we measure the performance of our algorithms for pure enumeration: the programs are not evaluated on input examples, enumeration continues for a fixed amount of time.
For each parameter, we created parametric grammars:
\begin{itemize}
    \item The grammar $D_k$ has $3k$ derivation rules.
    It uses a single non-terminal $S$.
    The primitives are: $k$ primitives $f_i$ of arity $2$, $k$ primitives $g_i$ of arity $1$, and $k$ constants $h_i$ (arity $0$).
    The derivation rules are, for each $i \in [1,k]$:
    \[
    S \to f_i(S,S) \quad;\quad S \to g_i(S) \quad;\quad S \to h_i
    \]


    \item The grammar $N_k$ has $k$ non-terminals, called $S_1,\dots,S_k$, with $S_1$ initial.
    The primitives are: $2k$ primitives $f_i,g_i$ of arity $1$ and $k$ constants $h_i$ (arity $0$).
    The derivation rules are, for each $i \in [1,k]$:
    \[
        S_1 \to f_i(S_i) \quad;\quad S_i \to g_i(S_1) \quad;\quad S_i \to h_i
    \]


    \item The grammar $R_k$ has $k$ non-terminals, called $S_1,\dots,S_k$, with $S_1$ initial.
    The primitives are: $k$ primitives $f_i$ of arity $3$, $k$ primitives $g_i$ of arity $2$, $k$ primitives $h_i$ of arity $1$, and $k$ constants $k_i$ (arity $0$).
    The derivation rules are, for each $i \in [1,k]$:
    \begin{align*}
        S_i \to f_i(S_{i-1}, S_i, S_{i+1}) & \quad;\quad & S_i \to g_i(S_1, S_i) \\
        S_1 \to h_i(S_i) & \quad;\quad & S_i \to k_i
    \end{align*}

\end{itemize}
\noindent For each of the three parameters, we consider two scenarios: 
\begin{enumerate}
    \item \emph{Throughput}: For a fixed grammar, how many programs are enumerated as a function of time.
    \item \emph{Scaling law}: For a range of values of the parameter, how long does it take to enumerate one million programs.
\end{enumerate}

We plot the results for the three parameters and both scenarios on Figure~\ref{fig:scaling}.

First, we look at the evolution with the number of derivation rules.
\ecoS and \beeS perform equally well, almost irrespective of the bucket size. However, removing bucketing makes \ecoS much slower. 
When we look at the scaling law, the same result is observed, and more generally it seems there is little to no influence of the number of derivation rules.

Second, looking at the number of non-terminals, for the throughput scenario the results are the same as for the main experiments: \heapS $<$ \beeS $<$ \ecoS without bucketing $<$ \ecoS.
On the scaling law, we however observe that \beeS is outperformed by \heapS for grammars with more than $15$ non-terminals.
The same growth is observed for all variants of \ecoS albeit at a slower pace.
This suggests that \beeS scales badly with the number of non-terminals: 
increasing the number of non-terminals $4$x, \beeS takes $3$x more time, while \ecoS takes only $2$x more time.

Finally, looking at the distance to the starting non-terminal. 
\beeS is missing since we failed to enumerate $10K$ programs within the timeout, even for $R_4$.
Similarly, \heapS was not plotted for larger parameters because it failed to enumerate 1M programs.
For the throughput, except for the disappearance of \beeS the results are as expected.
For the scaling law, we observe that the distance has a significant impact: 
\ecoS takes $6$x more time for $R_{22}$ compared to $R_4$.

Moreover, Figure~\ref{fig:scaling} highlights the slowing down over time of the different algorithms.
If we compare how the throughput evolves with the number of programs enumerated, then all algorithms but \ecoS slow down faster due to their logarithmic delay.
It is heavily highlighted on Figure~\ref{fig:scaling_dist}, where \heapS fails to generate 100.000 programs in the last 20 seconds of the experiment and \beeS simply fails to do so in the first 20 seconds. The slope for \ecoS without buckets clearly increase faster than for \ecoS indicating a faster slow down.


\paragraph*{Summary} 
\begin{quote}
    \ecoS scales better than alternatives in terms of number of non-terminals and distance to starting non-terminal, and equally well as \beeS for the number of derivation rules. 
    \ecoS slows down less than logarithmic delay algorithms.
    Also, \ecoS is relatively robust to the choice of bucket size.
\end{quote}


\section{Related works}
\label{sec:related_works}
Combinatorial search for program synthesis has been an active area~\cite{Alur.Singh.ea:2018}, and a powerful tool in combination with neural approaches~\cite{Chaudhuri.Ellis.ea:2021}.
In particular, cost-guided combinatorial search provides a natural way of combining statistical or neural predictions with search~\cite{Menon.Tamuz.ea:2013,Balog.Gaunt.ea:2017}.

By exploring the space in the exact order induced by the cost function, best-first search algorithms form a natural family of algorithms. 
The first best-first search algorithm constructed in the context of cost-guided combinatorial search was an $A^*$ algorithm~\cite{Alur.Radhakrishna.ea:2017}.
\ecoS can be thought of as the unification of \heapS~\cite{Fijalkow.Lagarde.ea:2022} and \beeS~\cite{Ameen.Lelis:2023}, both best-first search bottom-up algorithms.

Best-first search algorithms were also developed for Inductive Logic Programming~\cite{Cropper.Dumancic:2020}.


Importantly, \ecoS follows the bottom-up paradigm, where larger programs are obtained by composing smaller ones~\cite{Udupa.Raghavan.ea:2013}. Bottom-up algorithms have been successfully combined with machine learning approaches, for instance the PC-Coder~\cite{Zohar.Wolf:2018}, Probe~\cite{Barke.Peleg.ea:2020}, \tfcoder~\cite{Shi.Bieber.ea:2022}, and DreamCoder~\cite{Ellis.Wong.ea:2021}.
In these works, machine learning is used to improve combinatorial search for program synthesis, while \bustle{}~\cite{Odena.Shi.ea:2021} and Execution-Guided Synthesis~\cite{Chen.Liu.ea:2019} use neural models to guide the search process itself.
Alternatively, \crossbeam{}~\cite{Shi.Dai.ea:2022} and LambdaBeam~\cite{Shi.Dai.ea:2023} leverage Reinforcement Learning for this purpose.

Interestingly, LambdaBeam can solve many tasks that LLMs cannot solve thanks to its ability to perform high-level reasoning and composition of programs. 
Together with recent approaches using LLMs for guiding combinatorial search~\cite{Li.Parsert.ea:2024,Li.Ellis:2024}, this motivates developing faster algorithms for cost-guided combinatorial search.


\section{Conclusions}
\label{sec:conclusions}
We introduced a new best-first bottom-up search algorithm called \ecoS, and proved that it is a constant-delay algorithm, meaning that the amount of compute required from outputting one program to the next is constant.
On two classical domains this enables solving twice as many tasks in the same amount of time than previous methods.

Our experiments reveal an important research direction: combinatorial search algorithms suffer drops in performance when increasing the complexity of the grammar. In many cases the grammar remains small and this limitation is not drastic. However, recent applications of program synthesis use large or even very large grammars, for instance~\citet{Hodel:2024} constructs a very large DSL towards solving the Abstraction Reasoning Corpus~\cite{Chollet:2019}.
We leave as an open question to construct best-first search algorithms that can operate at scale on such large DSLs.

\newpage
\section*{Acknowledgement}

This work was partially supported by the SAIF project, funded by the ``France 2030'' government investment plan managed by the French National Research Agency, under the reference ANR-23-PEIA-0006.
\bibliography{references}

\newpage
\onecolumn
\appendix

\section{Proofs for the bucketing properties}
\label{sec:proofs}
\costdifference*

\begin{proof}
Let $\mathcal{F}$ be the set of all programs generated by some non-terminal, with the following properties:
\begin{itemize}
\item[(*)] Any non-terminal appears at most once along any path from the root to a leaf in the derivation tree of the program,
\item[(**)] The programs in $\mathcal{F}$ have a successor.
\end{itemize}

First, observe that $\mathcal{F}$ is a finite set since there is a finite number of programs satisfying property (*). Therefore we can define the constant 
\[
M = \max\limits_{\program \in \mathcal{F}} \{ \costsucc(\program) - \cost(\program) \}.
\]

Consider any node $n$ of the derivation tree of $\program$ for which the subprogram $\program_n$ rooted at $n$ is in the set $\mathcal{F}$. It is always possible to find such a node $n$ because $\program$ has a successor: starting from the root, we can always choose a child which has at least one successor until condition (*) is satisfied. Note that as long as condition (*) is not satisfied, there is always a child with a successor because there is a duplicated non-terminal on some path, ensuring that the process is sound.

We now show that the cost difference between $\program$ and its successor $\program'$ can be bounded by $M$. Since $\program_n$ belongs to $\mathcal{F}$, there exists a successor subprogram $\program_n'$ of $\program_n$ such that $\cost(\program_n') - \cost(\program_n) \leq M$.

The overall program $\program$ can be thought of as being composed of two parts: the part above $n$ and the subtree rooted at $n$. When $\program_n$ is replaced by $\program_n'$, we obtain a program $\program''$ for which the cost is an upper bound on the cost of the successor $\program'$ of $\program$. Therefore, we have:
\[
\begin{array}{lll}
\cost(\program') - \cost(\program) & \leq & \cost(\program'') - \cost(\program) \\
& = & \cost(\program_n') - \cost(\program_n) \\
& \leq & M.
\end{array}
\]
\end{proof}

\boundcost*

\begin{proof}
We prove the property by induction.
First observe that it holds at the beginning of the algorithm. 
To see that it is maintained when a program $\program$ is popped from the queue and its successors are added, we make two observations.
\begin{itemize}
	\item by Lemma~\ref{lemma:cost_difference}, the cost difference between the program and its successors is bounded by $M$, and
	\item $\program$ has minimal cost in the queue.
\end{itemize}
\end{proof}

\section{Best-first bottom-up search algorithms: \heapS and \beeS}
\label{sec:enumeration_algorithms}
In this section we present in detail two best-first bottom-up search algorithms, \heapS and \beeS.
Bottom-up search starts with the smallest programs and iteratively generates larger programs by combining the smaller ones generated by the algorithm.

To make our pseudocode as readable as possible we use the generator syntax of Python. 
In particular, the \textbf{yield} statement is used to return an element (a program in our case) and continue the execution of the code.

We use the DSL presented in Figure~\ref{fig:running_example_grammar} as running example for the algorithms.
For readability, we will use some abbreviations: $S = \verb+string+, I = \verb+int+, H = \verb+"Hello"+$, and $W = \verb+"World"+$.

\subsection{Computing programs of minimal costs}

As a warm-up, we need a procedure to compute for each non-terminal $X$ a program of minimal cost.
Note that this is well defined because costs are positive, and that we do not require to compute all minimal programs, just a single one.
The pseudocode is given in Algorithm~\ref{algo:minimal_programs}.
The algorithm simply propagates the minimal programs and costs found across derivation rules, and repeats the propagation as long as it updates values. A simple analysis shows that the number of iterations of the \textbf{while} loop (line 9) is bounded by the number of non-terminals in the grammar, so the algorithm always terminate. In practice the number of iterations is often much smaller.

\begin{algorithm}
\caption{Computing programs of minimal costs}\label{algo:minimal_programs}
\begin{algorithmic}[1]
\For{$X$ non-terminal}
	\Let{$\mincost(X)$}{$\infty$}
	\Let{$\minprogram(X)$}{$\bot$}
\EndFor
\For{$r : X \rightarrow a$ derivation rule}
	\If{$\cost(r) < \mincost(X)$}
		\Let{$\mincost(X)$}{$\cost(r)$}
		\Let{$\minprogram(X)$}{$a$}
	\EndIf
\EndFor
\Statex
\Let{$\text{updated}$}{$\text{True}$}
\While{$\text{updated}$}
	\Let{$\text{updated}$}{$\text{False}$}
	\For{$r : X \rightarrow f(X_1,\dots,X_k)$ derivation rule}
		\Let{$c$}{$\cost(r) + \sum_{i = 1}^k \mincost(X_i)$}
		\If{$c < \mincost(X)$}
			\Let{$\mincost(X)$}{$c$}
			\Let{$\minprogram(X)$}{$f(\minprogram(X_1),\dots,\minprogram(X_k))$}
			\Let{$\text{updated}$}{$\text{True}$}
		\EndIf
	\EndFor
\EndWhile
\end{algorithmic}
\end{algorithm}

\subsection{\heapS}

The \heapS algorithm maintains three objects:
\begin{itemize}
	\item $\seen$: stores all programs seen so far. Note that \textit{seen} is not the same as \textit{generated}, as we discuss below.
	\item for each non-terminal $X$, $\heap_X$ is a heap of programs, using as value the costs of the programs. Programs in $\heap_X$ are \textit{seen} but are yet to be \textit{generated}.
	\item for each non-terminal $X$, $\succ_X$ stores the successors of programs, that we define now. 
	Concretely, it is a mapping from programs to programs.
\end{itemize}

Let us explain the difference between \textit{seen} and \textit{generated}. 
A program is seen before it is generated.
The programs that are yield line 4 of Algorithm~\ref{algo:heap_search_next} are generated. When a program is inserted (using the function \myinsert), it is seen. It is sitting in some heap waiting for its turn to be generated.

Let us fix a non-terminal $X$, and $\program, \program'$ two programs generated by $X$. 
We say that $\program'$ is a successor of $\program$ if $\cost(\program) < \cost(\program')$
and there does not exist $\program''$ generated by $X$ such that $\cost(\program) < \cost(\program'') < \cost(\program')$. 
In other words, $\program'$ has minimal cost among programs of higher cost than $\program$ generated by $X$.

\vskip1em
The main function is \textsc{ComputeSuccessor} in Algorithm~\ref{algo:heap_search_next}: given a program $\program$ generated by $X$, it computes a successor of $\program$.
It works as follows: either a successor was already computed (therefore stored in $\succ_X$), in which case it is simply returned, or it was not.
This analysis was already done in details in previous work in the arxiv version Section C.2 Lemma 2 of \cite{Fijalkow.Lagarde.ea:2022}, therefore we only give an overview.
In the second case, the invariant of the algorithm ensures that the minimal element of $\heap_X$ is a successor, so we return it, let us call it $\program'$.
The goal of the lines 10--16 is to update the data structures, adding potential successors of $\program'$.
What the invariant of the algorithm shows is that the successor of $\program'$ falls in one of two categories:
\begin{itemize}
	\item it is already in $\heap_X$,
	\item it is obtained from $\program'$ by replacing one of its argument by its successor (for the corresponding non-terminal).
\end{itemize}

\begin{algorithm}
  \caption{Heap Search: initialisation\label{algo:heap_search_init}}
  \begin{algorithmic}[1]
		\State compute $\minprogram(X)$ a program of minimal cost from $X$ for each non-terminal $X$
		\State $\seen$: set of programs
    \For{$X$ non-terminal}
			\State $\heap_X$: heap of programs
			\State $\succ_X$: mapping from programs to programs
    \EndFor
    \Statex

    \Function{insert}{$\program, X$}:
		\State add $\program$ to $\heap_X$ with value $\cost(\program)$
		\State add $\program$ to $\seen$
		\EndFunction  	
    \Statex

		\For{$r : X \to f(X_1,\dots,X_k)$ derivation rule}
			\Let{$\program$}{$f(\minprogram(X_1), \dots, \minprogram(X_k))$}
			\State \myinsert($\program, X$)
		\EndFor
    \Statex

		\For{$X$ non-terminal}
			\State $\ComputeSuccessor(\bot, X)$
	  	\Comment $\bot$ is a dummy programme
		\EndFor
  \end{algorithmic}
\end{algorithm}

\begin{algorithm}
  \caption{Heap Search: main loop\label{algo:heap_search_next}}
  \begin{algorithmic}[1]
  	\Let{$\program$}{$\bot$}
  	\Comment $\bot$ is a dummy programme
  	\While{\text{True}}
  		\Let{$\program$}{$\ComputeSuccessor(\program, S)$} 
  		\Comment $S$ is the initial non-terminal
  		\State \textbf{yield} $\program$
  	\EndWhile
  	\Statex

    \Function{ComputeSuccessor}{$\program, X$}:
    	\If{$\succ_X(\program)$ is defined}
    		\State \textbf{return} $\succ_X(\program)$
    	\Else
			\Let{$\program'$}{$\pop(\heap_X)$}
			\Let{$\succ_X(\program)$}{$\program'$}
			\State $\program' = f(\program_1,\dots,\program_k)$
			\Comment $\program'$ is generated by $X \rightarrow f(X_1,\dots,X_k)$
	        \For{$i$ from $1$ to $k$}
				\Let{$\program'_i$}{$\ComputeSuccessor(\program_i, X_i)$}
				\Let{$\program''_i$}{$f(\program_1,\dots,\program'_i,\dots,\program_k)$}
				\If{$\program''_i$ not in $\seen$}
					\State \myinsert($\program''_i, X$)
				\EndIf
		    \EndFor
		    \State \textbf{return} $\program'$
		\EndIf
	\EndFunction  	
  \end{algorithmic}
\end{algorithm}

\paragraph*{An example by hand.} 
We consider the grammar and associated costs defined in Figure~\ref{fig:running_example_grammar}.
In a single iteration, Algorithm~\ref{algo:minimal_programs} finds $\minprogram(S) = H, \mincost(S) = 1.1$ and $\minprogram(I) = \text{var}, \mincost(I) = 1.8$.
During initialisation, we perform insertions of the following programs:
\[
H, W, \text{concat}(H,H), \text{cast}(\text{var}), \text{var}, 1, \text{add}(\text{var},\text{var}),
\]
and then run \textsc{ComputeSuccessor}$(\bot,S)$ and \textsc{ComputeSuccessor}$(\bot,I)$.
At this point, the data structures are as follows, with costs indicated below programs:
\[
\succ_S(\bot) = H,\ \succ_I(\bot) = \text{var}
\]
\[
\begin{array}{lll}
\heap_S & = & \set{\underbrace{W}_{2.0}, \underbrace{\text{cast}(\text{var})}_{6.2}, \underbrace{\text{concat}(H,H)}_{7.5}} \\
\heap_I & = & \set{\underbrace{1}_{3.3}, \underbrace{\text{add}(\text{var},\text{var})}_{8.9}} \\
\seen   & = & \set{H, W, \text{concat}(H,H), \text{cast}(\text{var}), \text{var}, 1, \text{add}(\text{var},\text{var})}
\end{array}
\]
Let us analyse the first four calls:
\begin{enumerate}
	\item \textsc{ComputeSuccessor}$(\bot,S)$ returns $H$, already computed during initialisation.
	\item \textsc{ComputeSuccessor}$(H,S)$:	we pop $W$ from $\heap_S$, set $\succ_S(H) = W$, and return $W$.
	\item \textsc{ComputeSuccessor}$(W,I)$: we pop $\text{cast}(\text{var})$ from $\heap_I$, let us call it $\program'$ and set $\succ_I(W) = \program'$.
	Before returning $\program'$, we need to update the data structures, lines 12 to 16.
	We run \textsc{ComputeSuccessor}$(\text{var},I)$, which pops $1$ from $\heap_I$, sets $\succ_I(\text{var}) = 1$, and returns $1$.
	We consider $\text{cast}(1)$, currently not in $\seen$, so it is inserted.
	After this update the heaps are as follows:
\[
\begin{array}{lll}
\heap_S & = & \set{\underbrace{\text{concat}(H,H)}_{7.5}, \underbrace{\text{cast}(1)}_{7.7}} \\
\heap_I & = & \set{\underbrace{\text{add}(\text{var},\text{var})}_{8.9}} \\
\end{array}
\]
	\item \textsc{ComputeSuccessor}$(\text{cast}(\text{var}),S)$: we pop $\text{concat}(H,H)$ from $\heap_S$, let us call it $\program'$ and set $\succ_S(\text{cast}(\text{var})) = \program'$.
	Before returning $\program'$, we need to update the data structures, lines 12 to 16.
	We run \textsc{ComputeSuccessor}$(H,I)$, which itself calls \textsc{ComputeSuccessor}$(\text{var},S)$. The latter returns $1$, and the former $\text{add}(\text{var},\text{var})$, after inserting $\text{add}(1,\text{var})$ and $\text{add}(\text{var},1)$ (to $\heap_I$ and $\seen$).
	We consider $\text{concat}(\text{add}(\text{var},\text{var}), H)$ and $\text{concat}(H, \text{add}(\text{var},\text{var}))$, and insert them both.	
\end{enumerate}

\subsubsection{Limitations of \heapS}
There are two limitations of \heapS:
\begin{itemize}
	\item The first is the structure of recursive calls when updating the data structures, which insert a lot of programs.
More precisely, the issue is that these programs are added to the data structures although they are not generated yet, because they may have much larger costs. In other words, when generating a program of cost $c$, \heapS needs to consider many programs that have costs potentially much larger than $c$. This makes the algorithm very memory hungry.
	\item The second is that it needs to explicit build all the programs it considers, again very heavy on memory consumption. 
\end{itemize}

\subsection{\beeS}

The \beeS algorithm maintains three objects:
\begin{itemize}
	\item $\generated$: stores the set of programs generated so far, organised by costs. 
	Concretely, it is a mapping from costs to sets of programs: $\generated[c]$ is the set of generated programs of cost $c$.
	\item $\indextocost$: a list of the costs of the generated programs.
	Let us write $\indextocost = [c_1,\dots,c_\ell]$, then $c_1 < \dots < c_\ell$ and $\generated[c_i]$ is defined.
	\item $\queue$: stores information about which programs to generate next. 
	Concretely, it is a priority queue of \textit{cost tuples} ordered by costs, that we define now.
\end{itemize}

A \textit{cost tuple} is a pair consisting of a derivation rule $r : X \rightarrow f(X_1,\dots,X_k)$ and a tuple $n = (n_1,\dots,n_k) \in \mathbb{N}^k$.
For derivation rules $r : X \rightarrow a$, cost tuples are of the form $(r, \emptyset)$.
A cost tuple represents a set of programs: 
$(r, n)$ represents all programs generated by the rule $r$ where the $i$\textsuperscript{th} argument is any program in $\generated[\indextocost[n_i]]$.
The cost of a cost tuple $t = (r, n)$ is defined as 
\[
\cost(t) = \cost(r) + \sum_{i = 1}^k \indextocost[n_i].
\]

\vskip1em
The main function is \textsc{Output} in Algorithm~\ref{algo:bee_search_main}, which is called repeatedly and indefinitely.
A single call to \textsc{Output} generates \textbf{all} programs represented by the cost tuple $t$ found by popping $\queue$. There are two cases: 
\begin{itemize}
	\item Line $5$ if $t = (r : X \rightarrow a, \emptyset)$.
	The cost of $t$ is $\cost(r)$ and the single program generated is $a$.
	To update the data structure, we check whether $c \neq \indextocost[-1]$, meaning that the last generated program had cost strictly less than $c$. In that case we assign a new empty list to $\generated[c]$, otherwise $\generated[c]$ already exists, and in both cases we add $a$ to $\generated[c]$.
	\item Line $12$ if $t = (r : X \rightarrow f(X_1,\dots, X_k), n : (n_1,\dots,n_k))$.
	The cost of $t$ is easily computed recursively.
	Lines 14--16 we assign a new empty list to $\generated[c]$ if it did not exist already.
	Lines 17--21 generate programs with cost $c$.
	Lines 22--27 update the data structures by adding the necessary cost tuples.
\end{itemize}

\begin{algorithm}
  \caption{Bee Search: initialisation\label{algo:bee_search_init}}
  \begin{algorithmic}[1]
	\State $\generated$: mapping from costs to sets of programs
	\State $\indextocost$: list of costs
	\State $\queue$: priority queue of cost tuples ordered by costs
   	\Statex
	\Let{$c$}{minimal cost of a program}
	\State add $c$ to $\indextocost$
   	\Statex
	\For{$r : X \to f(X_1,\dots,X_k)$ derivation rule}
		\Let{$t$}{$(r, \underbrace{(0,\dots,0)}_{k \text{ times}})$}
		\Let{$\cost(t)$}{$\cost(r) + k \times c$}
		\State add $t$ to $\queue$ with value $\cost(t)$
	\EndFor
  \end{algorithmic}
\end{algorithm}

\begin{algorithm}
  \caption{Bee Search: main loop\label{algo:bee_search_main}}
  \begin{algorithmic}[1]
  	\While{\text{True}}
  		\State $\textsc{Output}()$
  	\EndWhile
  	\Statex
    \Function{output()}{}:
		\Let{$t$}{$\pop(\queue)$}
		\Comment generates all programs represented by $t$

		\If{$t = (r : X \rightarrow a, \emptyset)$} 
			\Let{$c$}{$\cost(r)$}
			\Comment computes $\cost(t)$
			\If{$c \neq \indextocost[-1]$}
				\State add $c$ to $\indextocost$
				\Comment the last generated program did not have cost $c$
				\Let{$\generated[c]$}{$\emptyset$}
			\EndIf
			\State add $a$ to $\generated[c]$
			\State \textbf{return} $a$

	  	\Statex
		\Else\ $t = (r : X \rightarrow f(X_1,\dots, X_k), n : (n_1,\dots,n_k))$
			\Let{$c$}{$\cost(r) + \sum_{i = 1}^k \indextocost[n_i]$}
			\Comment computes $\cost(t)$
			\If{$c \neq \indextocost[-1]$}
				\State add $c$ to $\indextocost$
				\Comment the last generated program did not have cost $c$
				\Let{$\generated[c]$}{$\emptyset$}
			\EndIf
			
			\For{$\program_1,\dots,\program_k$ in $\bigotimes_{i = 1}^k \generated[\indextocost[n_i]]$}
			\Comment{generates programs}
				\If{for all $i \in \set{1,\dots,k}$, $\program_i$ is generated by $X_i$}
					\Let{$\program$}{$f(\program_1,\dots,\program_k)$}
					\State add $\program$ to $\generated[c]$
					\State \textbf{yield} $\program$
				\EndIf
			\EndFor

	        \For{$i$ from $1$ to $k$}
			\Comment{updates the data structure}
				\Let{$n'$}{$n$}
				\Let{$n'_i$}{$n_i + 1$}
				\If{$t' = (r,n')$ not in $\queue$}
					\Let{$\cost(t')$}{$c + \indextocost[n'_i] - \indextocost[n_i]$}
					\Comment efficient computation of $\cost(t')$
					\State add $t'$ to $\queue$ with value $\cost(t')$
				\EndIf
			\EndFor
		\EndIf
	\EndFunction  	
  \end{algorithmic}
\end{algorithm}

\paragraph*{An example by hand.} 
We consider the grammar and associated costs defined in Figure~\ref{fig:running_example_grammar}.
The minimal cost of a program is $1.1$, so we set $\indextocost = \set{1.1}$.
During initialisation, we add the following cost tuples:
\[
(r_1, \emptyset), (r_2, \emptyset), (r_3, (0)), (r_4, (0,0)), (r_5, \emptyset), (r_6, \emptyset), (r_7, (0,0)).
\]
At this point, the queue is as follows, with costs indicated below cost tuples:
\[
\queue = \set{\underbrace{(r_1, \emptyset)}_{1.1}, 
\underbrace{(r_5, \emptyset)}_{1.8}, 
\underbrace{(r_2, \emptyset)}_{2.0}, 
\underbrace{(r_6, \emptyset)}_{3.3}, 
\underbrace{(r_3, (0))}_{5.5}, 
\underbrace{(r_4, (0,0))}_{7.5}, 
\underbrace{(r_7, (0,0))}_{7.5}}
\]
Let us analyse the first calls to $\textsc{Output}$:
\begin{enumerate}
	\item We pop $(r_1, \emptyset)$ and add $H$ to $\generated[1.1]$.
	\item We pop $(r_5, \emptyset)$, add $1.8$ to $\indextocost$ and $\text{var}$ to $\generated[1.8]$.
	\item We pop $(r_2, \emptyset)$, add $2.0$ to $\indextocost$ and $W$ to $\generated[2.0]$.
	\item We pop $(r_6, \emptyset)$, add $3.3$ to $\indextocost$ and $1$ to $\generated[3.3]$.
	At this point we have $\indextocost = \set{1.1,\ 1.8,\ 2.0,\ 3.3}$.
	\item We pop $(r_3, (0))$, of cost $\cost(r_3) + \indextocost[0] = 4.4 + 1.1 = 5.5$. 
	We try generating programs: $\generated[\indextocost[0]] = \set{H}$. 
	Since $H$ is not generated by $I$, the rule $r_3$ does not apply, and the algorithm does not generate programs at this step.
	We then update the data structure, adding $(r_3, (1))$ to $\queue$ with cost $5.5 + 1.8 - 1.1 = 6.2$.
At this point, the queue is as follows, with costs indicated below cost tuples:
\[
\queue = \set{\underbrace{(r_3, (1))}_{6.2}, 
\underbrace{(r_4, (0,0))}_{7.5}, 
\underbrace{(r_7, (0,0))}_{7.5}}
\]
	\item We pop $(r_3, (1))$, of cost $\cost(r_3) + \indextocost[1] = 4.4 + 1.8 = 6.2$. 
	We try generating programs: $\generated[\indextocost[1]] = \set{\text{var}}$. 
	The program $\text{var}$ is generated by $S$, so the algorithm generates $\text{cast}(\text{var})$.
	We then update the data structure, adding $(r_3, (2))$ to $\queue$ with cost $6.2 + 2.0 - 1.8 = 6.4$.
At this point, the queue is as follows, with costs indicated below cost tuples:
\[
\queue = \set{\underbrace{(r_3, (2))}_{6.4}, \underbrace{(r_4, (0,0))}_{7.5}, \underbrace{(r_7, (0,0))}_{7.5}}
\]
\end{enumerate}

\subsubsection{Limitations of \beeS}
The main limitation of \beeS is that there may be calls to \textsc{Output} where the algorithm does not generate any program, as in the fifth iteration in our example.

\section{Full pseudocode for \ecoS}
\label{sec:eco_search}
The subroutine for computing for each non-terminal $X$ a program of minimal cost $\minprogram(X)$ from $X$ and its cost and $\mincost(X)$ is described in Section~\ref{sec:enumeration_algorithms}.

\begin{algorithm}
\caption{\ecoS: initialisation\label{algo:beap_search_init}}
\begin{algorithmic}[1]
\State compute $\minprogram(X)$ and $\mincost(X)$ a program of minimal cost from $X$ and its cost, for each non-terminal $X$
\For{$X$ non-terminal}
	\State $\generated_X$: mapping from costs to sets of programs
	\State $\indextocost_X$: list of costs
	\State $\queue_X$: priority queue of cost tuples ordered by costs
\EndFor
\Statex
\For{$r : X \to f(X_1,\dots,X_k)$ derivation rule}
	\Let{$c$}{$\cost(r) + \sum_{i = 1}^k \mincost(X_i)$}	
	\Comment computes the cost of the (implicit) program $f(\minprogram(X_1), \dots, \minprogram(X_k))$
	\State add $(r, (0,\dots,0))$ to $\queue_X$ with value $c$
\EndFor
\end{algorithmic}
\end{algorithm}

\begin{algorithm}
\caption{\ecoS: main loop}\label{algo:beap_search_main_loop}
\begin{algorithmic}[1]
\State $\ell \gets 0$
\While{\text{True}}
	\State $\textsc{Output}(S, \ell)$
    \Let{$\ell$}{$\ell + 1$}
\EndWhile
\Statex

\Function{output}{$X, \ell$}:
	\Comment generates all programs from $X$ with $\ell$-smallest cost
	\If{$\indextocost_X[\ell]$ is defined}
	\Comment the result was already computed and can be read off from the data structure
    	\State \textbf{return} $\generated_X[\indextocost_X[\ell]]$
	\EndIf
   	\Let{$t$}{$\peek(\queue_X)$}
   	\Comment returns the minimal cost tuple in $\queue_X$ without popping it
   	\Let{$c$}{$\cost(t)$}
   	\Comment $\cost(t)$ is stored together with the cost tuple $t$
	\If{$\indextocost_X$ is empty or $c \neq \indextocost_X[-1]$}
	    \State add $c$ to $\indextocost_X$
		\Comment the last generated program from $X$ did not have cost $c$ 	
	    \State $\generated_X[c] \gets \emptyset$
	\EndIf
    \While{$\cost(t) = c$}
		\Comment we iterate as long as we find cost tuples with cost $c$ in $\queue_X$
	   	\State $\pop(\queue_X)$
		\If{$t = (r : X \rightarrow a, \emptyset)$}
			\State add $a$ to $\generated_X[c]$
			\State \textbf{yield} $a$
		\Else\ $t = (r : X \rightarrow f(X_1,\dots, X_k), n : (n_1,\dots,n_k))$
	   	    \For{$\program_1,\dots,\program_k$ in $\bigotimes_{i = 1}^k \textsc{Output}(X_i, n_i)$}
				\Comment generates programs
    	   		\Let{$\program$}{$f(\program_1,\dots,\program_k)$}
       			\State add $\program$ to $\generated_X[c]$
            	\State \textbf{yield} $\program$
			\EndFor
    	    \For{$i$ from $1$ to $k$}
				\Comment updates the data structure
        		\Let{$n'$}{$n$}
        		\Let{$n'_i$}{$n_i + 1$}
           		\If{$t' = (r, n')$ not in $\queue_X$} 
	           		\If{$n'_i$ not in $\indextocost_{X_i}$}
						\Let{$\indextocost_{X_i}[n'_i]$}{$\cost(\peek(\queue_{X_i}))$}
					\EndIf
					\Let{$\cost(t')$}{$\cost(t) + \indextocost_{X_i}[n'_i] - \indextocost_{X_i}[n_i]$}
					\State add $t'$ to $\queue_X$ with value $\cost(t')$
				\EndIf
	   	  	\EndFor
		\EndIf
    	\Let{$t$}{$\peek(\queue_X)$}
	\EndWhile
\EndFunction
\end{algorithmic}
\end{algorithm}

\paragraph*{An example by hand.} 
We consider the grammar and associated costs defined in Figure~\ref{fig:running_example_grammar}.
Algorithm~\ref{algo:minimal_programs} finds $\minprogram(S) = H, \mincost(S) = 1.1$ and $\minprogram(I) = \text{var}, \mincost(I) = 2.0$.
During initialisation, we add the following cost tuples:
\[
(r_1, \emptyset), (r_2, \emptyset), (r_3, (0)), (r_4, (0,0)), (r_5, \emptyset), (r_6, \emptyset), (r_7, (0,0))
\]
At this point, the queues are as follows, with costs indicated below cost tuples:
\[
\queue_S = \set{\underbrace{(r_1, \emptyset)}_{1.1}, 
\underbrace{(r_2, \emptyset)}_{2.0}, 
\underbrace{(r_3, (0))}_{6.4}, 
\underbrace{(r_4, (0,0))}_{7.5}}
\]
\[
\queue_I = \set{\underbrace{(r_5, \emptyset)}_{1.8}, 
\underbrace{(r_6, \emptyset)}_{3.3}, 
\underbrace{(r_7, (0,0))}_{9.3}}
\]
Let us analyse the first calls:
\begin{enumerate}
	\item $\textsc{Output}(S,0)$: We pop $(r_1, \emptyset)$ from $\queue_S$, add $1.1$ to $\indextocost_S$, and add $H$ to $\generated_S[0]$.
	\item $\textsc{Output}(S,1)$: We pop $(r_2, \emptyset)$ from $\queue_S$, add $2.0$ to $\indextocost_S$, and add $W$ to $\generated_S[1]$.
	\item $\textsc{Output}(S,2)$: We pop $(r_3, (0))$ from $\queue_S$ and add $6.4$ to $\indextocost_S$. 
	Line 19 triggers a call to $\textsc{Output}(I,0)$. During this call, we pop $(r_5, \emptyset)$ from $\queue_I$, add $1.8$ to $\indextocost_I$, and add $\text{var}$ to $\generated_I[0]$. After the call we have $\generated_I[0] = \set{\text{var}}$. We now generate programs: we add $\text{cast}(\text{var})$ to $\generated_S[2]$.
	
	We then update the data structure. We consider $(r_3, (1))$.
	Since $\indextocost_I[1]$ does not exist yet we compute it: it is $\cost((r_6,\emptyset)) = 3.3$,
	so we add $(r_3, (1))$ to $\queue_S$ with cost $6.4 + 3.3 - 1.8 = 7.5$.

	\item $\textsc{Output}(S,3)$: We pop $(r_4, (0,0))$ from $\queue_S$ and add $7.5$ to $\indextocost_S$. 
	Line 19 triggers a call to $\textsc{Output}(S,0)$, already computed: $\generated_S[0] = \set{H}$.
	We add $\text{concat}(H,H)$ to $\generated_S[3]$.
	We then update the data structure. We consider $(r_4, (1,0))$ and $(r_4, (0,1))$.
	Here $\indextocost_S[1]$ already exists (iteration 2.).
	We add $(r_4, (1,0))$ and $(r_4, (0,1))$ to $\queue_S$ both with cost $7.5 + 2.0 - 1.1 = 8.4$.

	We do not yet exit the \textbf{while} loop (line 13): the next cost tuple in $\queue_S$ has the same cost $c = 7.5$, so we also pop $(r_3, (1))$.
	Line 19 triggers a call to $\textsc{Output}(I,1)$.
	During this call, we pop $(r_6, \emptyset)$ from $\queue_I$, add $3.3$ to $\indextocost_I$, and add $1$ to $\generated_I[1]$. After the call we have $\generated_I[1] = \set{1}$. We now generate programs: we add $\text{cast}(1)$ to $\generated_S[3]$.
	We then update the data structure. We consider $(r_3, (2))$.
	Since $\indextocost_I[2]$ does not exist yet we compute it: it is $\cost((r_7,(0,0))) = 9.3$,
	so we add $(r_3, (2))$ to $\queue_S$ with cost $6.4 + 9.3 - 3.3 = 12.4$.
\end{enumerate}

\end{document}